\DeclareFontFamily{U}{stix2bb}{}
\DeclareFontShape{U}{stix2bb}{m}{n} {<-> stix2-mathbb}{}
\NewDocumentCommand{\indicator}{}{\text{\usefont{U}{stix2bb}{m}{n}1}}
\newcommand{\Perp}{\!\perp\!\!\!\perp}
\title[UUMC SCM Generation]{Unitless Unrestricted Markov-Consistent SCM Generation: \\ Better Benchmark Datasets for Causal Discovery}
\begin{document}
\setlength{\abovedisplayskip}{4pt}
\setlength{\belowdisplayskip}{3pt}
\maketitle

\vspace{-18pt}
\begin{abstract}%
  Causal discovery aims to extract qualitative causal knowledge in the form of causal graphs from data. Because causal ground truth is rarely known in the real world, simulated data plays a vital role in evaluating the performance of the various causal discovery algorithms proposed in the literature. But recent work highlighted certain artifacts of commonly used data generation techniques for a standard class of structural causal models (SCM) that may be nonphysical, including var- and R2-sortability, where the variables' variance and coefficients of determination (R2) after regressing on all other variables, respectively, increase along the causal order. Some causal methods exploit such artifacts, leading to unrealistic expectations for their performance on real-world data. Some modifications have been proposed to remove these artifacts; notably, the internally-standardized structural causal model (iSCM) avoids varsortability and largely alleviates R2-sortability on sparse causal graphs, but exhibits a reversed R2-sortability pattern for denser graphs not featured in their work. We analyze which sortability patterns we expect to see in real data, and propose a method for drawing coefficients that we argue more effectively samples the space of SCMs.
  Finally, we propose a novel extension of our SCM generation method to the time series setting. These methods are implemented in the pythong package UUMCdata (\url{https://pypi.org/project/UUMCdata/}).
\end{abstract}

\begin{keywords}%
  causal discovery, benchmarking, data generation, sortability, time series%
\end{keywords}
\vspace{-12pt}
\section{Introduction}\label{intro}
\vspace{-6pt}
Causal discovery aims to extract qualitative causal knowledge from observational (or imperfect or incomplete interventional) data \citep{Spirtes2000,peters2017elements,Pearl}, and a plethora of learning algorithms using different approaches exist for both the static \citep{CD-Review, SAT} and time series settings \citep{runge2019inferring, Runge-Nature, assaad2022survey, CAMPSVALLS20231}. Benchmark datasets with known causal ground truth are vital to spur method development and to compare and evaluate existing methods, but unfortunately, datasets with known ground truth are often limited to a few feature variables \citep[e.g. the bivariate T\"ubingen Cause-Effect pairs by][]{mooij2016distinguishing} because it is rare to find real multivariate data where the underlying causal mechanisms are known with high certainty \citep{runge2019inferring,brouillard2024landscape}. 
Hence, synthetic data have played a vital role in method development and evaluation, but if the simulated datasets do not reflect the properties of the real-world datasets on which the algorithms will be applied, then high performance during evaluation may not translate to the real world. 

There are typically two preliminary steps in data generation: generation of a causal graph, and assignment of causal functions and noise variables to form a Structural Causal Model (SCM). The details of both steps can affect the performance of causal discovery algorithms. It's generally unclear what constitutes the best approach for sampling causal graphs, and usually Erdös-R{\'e}nyi \citep[ER,][]{ER} and scale-free graphs \citep{SF} are tested in parallel. The first approach samples randomly over possible edges in graphs with distinguishable vertices, and the second mimics the distribution of graphs resulting from typical mechanisms of network growth. Unfortunately, neither approach is designed to reflect the datasets intentionally curated from natural phenomena by scientists, and both over-represent graph structures with more isomorphic forms. The graph isomorphism problem is computationally hard \citep{hardness}, and we do not address it here. Instead, we address the functional mechanisms for specified graphs, focusing on linear additive models with Gaussian noise, in which mechanisms consist of coefficients and noise variances.

A standard approach is to sample coefficients from some uniform distribution over a bounded interval such as $(-2, -0.5)\cup (0.5, 2)$ and set noise variances to 1 \citep[e.g.][]{NOTEARS}. However, recent work has highlighted certain artifacts that this method of choosing ``random" functional relationships leaves on the generated data in both the static \citep{Beware, R2} and time series settings \citep{Christopher}. 
Two types of artifacts have received much attention: varsortability \citep{Beware} and R2-sortability \citep{R2}, in which the sample variance of the variables or the coefficient of determination after regressing on the other variables, respectively, are related to the topological order of the variables. Many prominent benchmarking datasets, such as that used for the NeurIPS 2019 Causality 4 Climate (C4C) competition \citep{Causality4Climate}, are strongly characterized by varsortability, tempting some to conclude that ``effect variables tend to have larger marginal variance than their causal ancestors" and ``large regression coefficients may predict causal links better in practice than small p-values" \citep{winner}, but the validity of these claims rests strongly on whether it is reasonable to expect varsortability in the real world.

A couple alternative SCM generation methods have been proposed that attempt to avoid these artifacts \citep[e.g.][]{Mooij-DG, Squires-DG}, including the internally-standardized structural causal model \citep[iSCM;][]{iSCM}, which computationally standardizes every variable before generating the next. This removes varsortability and greatly reduces R2-sortability, as evidenced by numerical experiments on a certain subset of causal graphs. But their method of drawing causal parameters still limits the relative magnitudes of causal coefficients and noise terms, which may affect the absolute and relative performance of causal discovery methods in a nonphysical way.

We assert here that varsortability is nonphysical, but show that any data that is modeled by an SCM \textit{should} feature a mild R2-sortability tendency \textit{opposite} to that of standard data generation techniques. We argue that an unbiased SCM generation process should be \emph{unitless}, \emph{Markov-consistent} (remote structural properties are not associated with local functional parameters), and \emph{unrestricted} (the relative magnitudes of parameters can be arbitrarily large or small; see Section \ref{defs}), and propose a distribution from which to choose the coefficients and noise variances that satisfies these properties. A final contribution of our work is a novel extension of our SCM generation method to the time series setting, which enables benchmarking on realistic and theoretically grounded data generation models for data from time-dependent systems that ubiquitously occur across the sciences.

\section{Foundations}

\subsection{Linear Additive Models with Gaussian Noise} 

\pagebreak

\subsubsection{Structural Causal Models} \label{SCM}
\vspace{-3pt}

An acyclic linear additive Gaussian Structural Causal Model (SCM) with \(N\in \mathbb{N}\) variables can be fully specified by the set of structural assignments
\begin{align}\label{eq:SCM}\Big\{X_i := \sum_j {a_{ji}X_j+U_i}, \, U_i \sim \mathcal N(m_i, s_i)\end{align}
for \(i=1,\dots,N\), vectors \(\vec m \in \mathbb{R}^N \text{ and } \vec s\in  (\mathbb R^+)^N\), and upper diagonal matrix \( A= (a_{ji}) \in \mathbb{R}^{N \times N}\) (an \emph{intervention} on $X_j$ consists of changing the right-hand side of its structural assignment). We assign the indices \(i\) consistent with the topological order, so that each \textit{endogenous} variable \(X_i\) is directly affected by a subset of its \emph{predecessors} \(X_{<i}=\{X_j|j<i\}\) which we call its \emph{parents} \(pa(X_i)=\{X_j|a_{ji}\neq0\}\subseteq X_{<i}\). $X_i$ is affected by $X_j\in pa(X_i)$ via a linear causal effect \(a_{ji}\) where \(a_{ji}=0 \ \forall j\geq i\) enforces acyclicity, and by noise drawn from independent normal distributions $\mathcal N(m_i, s_i)$ with mean $m_i$ and standard deviation $s_i$. Any dependence between $X_i$ and its non-parent predecessors is destroyed by conditioning on the parents of \(X_i\): \(X_i \Perp X_{<i} \setminus pa(X_i) | pa(X_i)\), known as the \emph{local Markov property}. 
The SCM can be represented graphically by listing the \(\{X_i\}\) and connecting \(X_j\rightarrow X_i\) if and only if \(X_j \in pa(X_i) \Leftrightarrow a_{ji} \neq 0\), and \(X_j \in an(X_i)\subseteq X_{<i+1}\) is referred to as an \emph{ancestor} of \(X_i\) if there is a directed path $X_j \rightarrow \cdots \rightarrow X_i$ of any length. The graph $\mathcal G = (V, E)$ is notated as an ordered pair with a list of nodes $V = \{X_i | i=1,\dots,N\}$ and a matrix of oriented adjacencies $E = \indicator (a_{ji}\neq 0) \in \{0,1\}^{N\times N}$. 
The SCM may be standardized by subtracting the mean \(
\mu_i = \mathbb E[X_i] = \sum_j a_{ji}\mu_j+m_i\) and dividing by the standard deviation $\sigma_i = (\mathbb E[X_i^2] - \mu_i^2)^{\sfrac{1}{2}}$ for each variable $X_i$, yielding the \emph{Unitless SCM} (see Definition \ref{def:unitless})
\begin{align}\Big\{\hat{X}_i := \sum_j \hat{a}_{ji} \hat{X}_j + \hat{U}_i, \hat{U}_i \sim \mathcal N(0, \hat{s}_i)\end{align}
subject to the constraint
\(\sum_{jk}\hat{a}_{ji}\hat{a}_{ki}\hat{\rho}_{jk}+\hat{s}_i^2  = 1
\),
where \(\hat{a}_{ji} := \frac{\sigma_j}{\sigma_i}a_{ji}\) and \(\hat{s}_i := \frac{s_i}{\sigma_i}\) are unitless, and
\(
\hat{\rho}_{jk} = \mathbb E[\hat{X}_j\hat{X}_k] = \mathbb E\left[\frac{X_j-\mu_j}{\sigma_j}\frac{X_k-\mu_k}{\sigma_k}\right]  = \rho_{jk}
\)
are elements of the correlation matrix for \(\{X_i\}\).
\vspace{-3pt}
\subsubsection{Structural Vector Autoregressive Models} \label{SVAR SCM}
\vspace{-3pt}
For time series, we cannot assume that samples from $V$ are independent. Formally, we can include a node \(X_i(t)\) for every variable at every time, and allow a lagged causal effect $X_j(t-\tau) \rightarrow X_i(t)$ for any $\tau\in \mathbb N$. The resulting infinite graph is called a time series \citep{tsg} or full time graph \citep{peters2017elements}. Complete models often require contemporaneous dependencies as well (where $\tau=0$), and in practice, \(\tau \leq \tau_{\max}\) is bounded by some maximum lag \(\tau_{\max} \in \mathbb{Z}^+\). 

Without further assumptions, we would need a causal effect coefficient \(a_{ji,t}(\tau)\) for every pair \((X_j(t-\tau), X_i(t))\) of \(X_i\) and \(X_j\) at time \(t\) and lag $\tau\in 0,\dots,\tau_{\max}$, as well as means \(m_{i,t}\) and standard deviations \(s_{i,t}\) for the noise distributions $U_i(t)$ for every variable \(X_i\) at every time \(t\). Under the assumption of \textit{causal stationarity}, $a_{ji,t}(\tau)=a_{ji}(\tau)$, $m_{i,t}=m_i$, and \(s_{i,t}=s_i \;\forall i,j,t,\tau \), giving
\begin{align}\label{eq:SVAR}\Big\{X_i(t) := \sum_{j= 1}^N \sum_{\tau = 0}^{\tau_{\max}} {a_{ji}(\tau) X_j({t-\tau})+U_i(t)}, \; U_i \sim \mathcal N(m_i, s_i)\end{align}
This can be represented with a \textit{summary causal graph} with nodes $\{X_i|i\in 1,\dots,N\}$ and an edge $X_j \rightarrow X_i$ if $X_j(t-\tau)\in pa(X_i(t))$ for some $\tau \in 0,\dots,\tau_{\max}$. This process may contain causal effects from a variable to itself (when $i=j$), and though we restrict our analysis to acyclic time series graphs where \(a_{ji}(0)=0 \; \forall j\geq i\), \textit{unrolled} cross-dependencies that lead to a cyclic summary graph ($X_j(t-\tau)\rightarrow X_i(t)$ and $X_i(t-\nu)\rightarrow X_j(t)$, with $\tau>0, \;\nu \geq 0$) may appear. 

With linear additive causal effects $A(\tau) = (a_{ji})(\tau) \in \mathbb R^{N\times N \times \tau}$ and Gaussian noise, such a model would constitute a \textit{Structural Vector Autoregressive} model \citep[SVAR;][]{SVAR}   
\(\mathsf{A}\mathbf{X}(t) = \sum_{\tau=1}^{\tau_{\max}} \mathsf{A}A^*(\tau)\mathbf{X}(t-\tau) + \mathbf{U}(t)\)
where $\mathsf{A} = A(0)^{-1} + \mathbf{I}_N$ and $A^*(\tau) = \mathsf{A}^{-1}A(\tau)$ . 
This process is \textit{stable} if \(\det(\mathsf{A} - \sum_{\tau=1}^{\tau_{\max}}\mathsf{A}A^*(\tau)z^\tau) = \det(A(0)^{-1} + \mathbf{I}_N - \sum_{\tau=1}^{\tau_{\max}}A(\tau)z^\tau)=0\) has roots with magnitude less than 1 \citep[extended from][]{stability}. Stability is a sufficient condition for \textit{statistical stationarity}, meaning that the mean and (co)variance is equivalent conditioned on any time (but not on previous observations). The standardized form of the SVAR model is
\begin{align}\label{eq:uSVAR}\Big\{\hat{X}_i(t) = \sum_{j=1}^N\sum_{\tau=0}^{\tau_{\max}} \hat{a}_{ji}(\tau)\hat{X}_j(t-\tau) + \hat{U}_i(t),\; \hat{U}_i \sim \mathcal N(0, \hat{s}_i)\end{align}
where $\hat{a}_{ji}(\tau) = \frac{\sigma_j}{\sigma_i}a_{ji}(\tau)$ and $\hat{s}_i=\frac{s_i}{\sigma_i}$, 
constrained by \(\sum_{jk}\sum_{\tau\nu}\hat{a}_{ji}(\tau)\hat{a}_{ki}(\nu)\hat{\rho}_{jk}(\tau-\nu) + \hat{s}_i^2 = 1\) 
where 
\(
\hat{\rho}_{jk}(\omega) =  \mathbb E[\hat{X}_j(t-\omega)\hat{X}_k(t)] = \mathbb E\left[\frac{X_j(t-\omega) - \mu_j}{\sigma_j}\frac{X_k(t) - \mu_k}{\sigma_k}\right]
\)
are lagged correlations for $\{X_i\}$. 

\subsection{Benchmarking Data} \label{benchmark}
\subsubsection{Existing Benchmarks} \label{Existing}
Many real-world and synthetic causal discovery benchmarks focus on the bivariate cause-effect identification case. To name a few: the Tübingen Cause Effect Pairs \citep{mooij2016distinguishing} cover a large number of synthetic as well as real-world pairs of static and time series data; \citet{guyon2019cause} present results from a large benchmark competition, all of which is now freely available\footnote{\url{https://www.causality.inf.ethz.ch/cause-effect.php}}; and \citet{kaeding2023distinguishing} introduce a bivariate synthetic benchmark suite. But while there are some real-world multivariate static datasets used for benchmarking, such as those hosted at CMI\footnote{  \url{https://www.cmu.edu/dietrich/causality/projects/causal_learn_benchmarks/}} and the new customizable Causal Chamber \citep{CausalChamber}, they are few in number because it is quite challenging to obtain complete real-world ground truth causal knowledge, and thus synthetic data plays a more important role in benchmarking in the multivariate setting \citep{brouillard2024landscape}.  

The crucial choice to make when generating random SCMs and SVAR models is what joint distribution to use for the causal coefficients and the noise standard deviations. A simple and commonly-used approach in the static setting is to draw $a_{ji}$ and $s_i$ independently over some uniform probability distribution. \cite{NOTEARS} and many other studies use $\mathcal U([-2,-0.5]\cup [0.5, 2])$ for the causal coefficients and $1$ for the noise standard deviations (we call this `UVN' for `Unit Variance Noise'), but other studies make other choices; for example, \cite{TETRAD-experiment} use $\mathcal U(-1,1)$  for the causal coefficients and $\mathcal U(1,2)$ for the noise standard deviations. The linear-VAR datasets provided on \url{causeme.net} \citep{runge2019inferring} as part of the NeurIPS 2019 Causality 4 Climate (C4C) competition benchmarking data  \citep{Causality4Climate} adapt this approach to the time series setting including a test for stability, and \cite{TSgeneration} propose a framework for generating random time series benchmarking data with fewer assumptions about the data generation process.

Other approaches attempt to be more realistic by using real data and expert knowledge as a starting point. They often employ expert knowledge to constrain the causal graph and real data to fit the causal coefficients, and then generate simulated data from the resulting `ground truth' SCMs. Some static examples include \texttt{causalAssembly} \citep{causalAssembly}, which is based on data measurements taken from a manufacturing assembly line, and SynTReN \citep{SynTReN}, which mimics experimental gene expression data by randomly choosing from networks and functional relationships crafted by experts. Unfortunately, the applicability of such pseudo-real datasets is often limited to the field from which the real data was taken, and is further limited by the properties of the finite datasets on which the pseudo-real data was based. 

Pseudo-real time series data can be generated to mimic any user-provided real dataset via CausalTime \citep{CausalTime}, and existing pseudo-real benchmarking datasets mimicking climate and weather data can be found at \url{causeme.net}. CausalTime constructs the `ground truth' causal models by (1) using deep neural networks to fit a nonlinear AR model (NAR), (2) extracting possible tractable causal graphs using importance analysis or expert knowledge, and finally (3) extracting the part of the fitted NAR model consistent with the causal graph. CauseMe takes a similar approach: beginning with randomly-selected large climate simulations and variables, it (1) extracts major modes of variablility using PCA-Varimax \citep{Varimax}, (2) fits a VAR model to the data, and (3) uses the residuals to determine the magnitude of the noise. In addition to the limitations of static pseudo-real data, since both automated model-generation methods are fundamentally based on non-causal analysis, they may or may not be able to capture the desired real-world distribution of causal systems that regression-based methods would fail to fully characterize. 

\subsubsection{Sortability}\label{varsortability def}
\paragraph{Varsortability}
\cite{Beware} introduce the concept of \textit{varsortability} in static graphs, which means that variance increases along the topological order. \cite{Beware} define a sortability metric that ranges from 0 to 1, where values above 0.5 indicate varsortability and values below 0.5 indicate reverse-varsortability, and \citet{Christopher} extend their metric to the time series setting by applying it to summary graphs, excluding pairs of nodes that are cyclically related. \citeauthor{Beware} describe their metric as a ``fraction of directed paths," but algorithmically, they only count directed paths between the same pair of nodes separately if they have \textit{distinct lengths}, yielding a metric that is not directly related to directed paths nor to connected node pairs. We propose a simple modification to \citeauthor{Beware}'s algorithm that counts each pair of nodes exactly once (see Appendix \ref{alg:sort} for python code), changing sortability values quantitatively, but not qualitatively (see Figure \ref{triples} last subplot for an example using R2-scores instead of variance). \cite{Beware} observed that the static data generation techniques from Section \ref{Existing} produce strongly varsortable datasets, and we confirm this computationally (Figure \ref{fig:data_gen}a and c). \citet{Christopher} showed that generated time series datasets such as the C4C data are also often highly varsortable even when they are stable over time. Varsortability can be removed from generated data post-hoc via standardization. 

\paragraph{R2-sortability}
\citet{R2} found that the joint distributions for SCM parameters that produce varsortability in simulated datasets leave another artifact in the data that cannot be removed after the fact: increasing fractional cause-explained variance along the topological order. Though ground truth causal structure is needed to calculate cause-explained variance, they show that the coefficient of determination \citep[\(R^2\),][]{CoD}---which is measurable in the absence of causal knowledge---gives an upper-bound for the fraction of cause-explained variance and is an effective proxy for recovering the topological order from simulated static datasets (see Figure \ref{fig:data_gen}b and c). 

The extension of R2-sortability to the time series setting is less obvious than it is for varsortability; \cite{Christopher} extend the definition by focusing on the fraction of cause-explained variance due to \textit{distinct processes}, calculating $R^2$ for $X_i(t)$ based on $\{X_j(t-\tau)|j\neq i, \tau\in 0...\tau_{\max}\}$, and excluding the past of $X_i$. We refer to this definition as R2*-sortability. Strong varsortability appears to be associated with strong reverse-R2*-sortability in both real and simulated time series datasets \citep{Christopher}. The reversal may be because variables with large variance are likely to be affected more by their own past than by distinct processes when all causal edge weights are drawn from the same distribution. We assert that a better extension of R2-sortability to the time series setting should include the past of the entire system, thus approximating the fraction of cause-explained variance in the time series graph. Figure \ref{fig:R2ex} shows that this modified definition can revert the association so that varsortability is associated with R2-sortability, as in the static case.

\paragraph{Real Systems}
While simulated static and time series datasets are characterized by strong var- and R2-sortability patterns, \cite{Christopher} found that two real-world time series datasets---a river flow dataset where the causal relationships can reasonably be given by the locations of data collection relative to the river flow \citep{River}, and data from a physical Causal Chamber in which the researcher controls the causal relationships \citep{CausalChamber}---demonstrate opposite extremes in var- and R2*-sortability, suggesting that simulated sortability tendencies are unrealistic. 

\subsubsection{Avoiding Sortability}\label{remedies}
There have been a few attempts in the literature to alleviate var- and R2-sortability in generated static benchmark data. \cite{Mooij-DG} assign unit noise, then divide it and the causal coefficients $\vec{a_i}$ for each node $X_i$ by $(1+\sum_j a_{ji}^2)^{\sfrac{1}{2}}$: the standard deviation $X_i$ would have if its parents were independent (we call this `IPA' for `Independent Parents Assumption'). Datasets generated by IPA have weaker expected varsortability; but, in combination with the authors' choice of distribution for drawing $a_{ji}$, IPA produces reverse var- and R2-sortable datasets on average (Figure \ref{fig:data_gen}d-f). \cite{Squires-DG} account for covariance of the parents but enforce an even split between explainable variance and noise by generating data without noise, dividing the data and $\vec{a_i}$ by $\sqrt{2}$ times the sample standard deviation, and then adding noise with $s_i=\frac{\sqrt{2}}{2}$ (we call this `50-50'). 50-50 lacks any tendency for varsortability (Figure \ref{fig:data_gen}g and i), but produces a tendency toward reverse-R2-sortability that is just as strong as the original tendency toward R2-sortability in standard data generation techniques (Figure \ref{fig:data_gen}h and i).
\cite{iSCM} also standardize during data generation, but they scale $\vec{a_i}$ and $s_i$ by the full sample standard deviation $\sqrt{\frac{1}{P-1}\sum_p^P \big[U_j^{(p)2} + \big(\sum_j a_{ji}X_j^{(p)}\big)^2}\big]$, thus completely removing varsortability (Figure \ref{fig:data_gen}j). They name this hybrid SCM-standardization approach the `iSCM', but it can be viewed as a different sampling of the distribution of SCMs, like all approaches discussed in Section \ref{Existing} and here. The iSCM reduces R2-sortability enough that it is not noticeable on the sparse graphs featured in the main body of their paper, but a tendency toward reverse R2-sortability is still visible for denser graphs (Figure \ref{fig:data_gen}k and l). Finally, \cite{DaO} introduce the ``DAG Onion" method (DaO), which uniformly samples joint probability distributions consistent with a graph, then samples SCMs that could produce it. Contrary to the motivation given in the paper, DaO produces data that is strongly reverse varsortable (Figure \ref{fig:data_gen}m and o) and mildly (reverse) R2-sortable for sparse (dense) graphs (Figure \ref{fig:data_gen}n and o).

\section{Thought Experiments}
With so many different ways to sample SCMs, the question should not be how to remove any exploitable artifact, but rather, what artifacts do we expect to see, and how do we want to sample the space of possible SCMs? Do we expect sortability tendencies? Should all variables be equally noisy? Does it make sense to draw causal coefficients uniformly or to limit how large or small the coefficients can be? The following thought experiments explore these questions. 

\subsection{Toy Model: Sortability in Standard SCM Generation Techniques}\label{Toy}
\vspace{-3pt}
To gain intuition for the standard generation techniques described in Section \ref{benchmark}, we inspect a toy SCM generation process that always assigns unit noise and causal coefficients. Though not random, it retains the features that lead to sortability in commonly-used random SCM generation techniques. 
We begin by examining chains of different lengths. For a chain of length 2, such as $X \rightarrow Y$, our toy SCM generation method would produce $X$ with variance 1 -- all of it noise, and $Y$ with variance 2 -- half of which is noise and half of which is causally explainable. If we added a third variable to the chain, it would have a variance of 3, one third of which is noise and two thirds of which is explainable. Thus, this toy generation process yields SCMs where the total variance and fraction of causally-explainable variance for a variable reflects the number of \textit{ancestors} it has. This results in var- and R2-sortability because the maximum number of ancestors a variable can have in an acyclic SCM is limited by its place in the topological order. Although this relationship is not deterministic when coefficients are assigned randomly, it holds on average for SCM generation techniques using unit noise because all root nodes must have a variance of 1 (all of it noise), while any variable with ancestors must have a variance that is larger than 1 (because the noise is independent from the cause-explained variance) by an amount that is, on average, proportional to the variance of its parents. Thus, we note that causal discovery algorithms relying on the equal variance assumption \citep{DYNOTEARS_defense, equal_variance_1, equal_variance_2} implicitly assume sortability.

\vspace{-6pt}
\subsection{Modeling Choices and Marginalization: Number of non-Parent Ancestors}\label{marginalization}
\vspace{-3pt}
When appending variables to a chain, increasing variance along the topological order may at first appear to be a reasonable physical pattern, but it becomes clearly inappropriate when we recognize that the number of ancestors a variable has in an SCM is a \textit{modeling choice}, not a fundamental physical property of the system. Given an SCM, one can always define a smaller---but equally-valid---SCM over any subset of the original variables through \textit{marginalization}, which absorbs each excluded variable into the exogenous noise terms of its children. If we begin with a causally sufficient SCM (the noise terms are jointly independent) and every marginalized variable has at most one child, then the marginalized SCM will also be causally sufficient. Likewise, it is often possible to expand an SCM representing a particular physical system by including additional parents and ancestors. Is there a limit to how many predecessors one can add to a physically-meaningful SCM? This would mean reaching an indisputable ``root cause", which is not influenced by anything at all. Whether such a ``root cause" exists is a theological question rather than a scientific one, but personal beliefs cannot affect the conclusion that, in practice, there is no ``root cause" that can be included in an SCM, because those who believe in an absolute power also believe that it is unknowable.

We want the outputs of our SCM generation process to represent physical properties of real systems, not arbitrary modeling choices. Let's assume our example chain from Section \ref{Toy} represents some real physical system. How does our representation of the system change when we apply marginalization transformations? If we marginalize out \(X\), we are left with the causal chain \(Y\rightarrow Z\), where the variance of \(Y\) is still 2 and the variance of \(Z\) is still 3. In this example and in general, the total variance of a variable has no relationship to the number of parents or ancestors included in the SCM. But while the fraction of causally explainable variance of $Z$ is still $\sfrac{2}{3}$, removing $X$ renders $Y$ completely unexplainable. The difference is that $X$ was a \textit{parent} of $Y$, but only an \textit{ancestor} of $Z$, making it clear that the fraction of causally explainable variance in an SCM representing a real system is related to the number of \textit{parents} included in the scientist's model, rather than the number of \textit{ancestors}. A realistic SCM generation process should not produce variances that reflect the structure of the causal graph, and so the variables may as well be standardized; but it is acceptable---and expected---for probability distribution of the fraction of variance due to noise to be related to the number of \textit{parents} (but not ancestors) a variable has in the causal graph used to model the system. 

\vspace{-6pt}
\subsection{Remote Changes: Independence of Cause and Mechanism} \label{remote}

\begin{minipage}{0.8\linewidth}
Say some physical system can be represented by the causal graph given by the black arrows in the diagram to the right and the SCM with unit coefficients and noise. The variances of $Z$ and $X_2$ would be 1, the variance of \(X_1\) would be 2, and the variance of $Y$ would be 4, with three fourths of it explainable.  
\end{minipage} \hfill
\begin{minipage}{0.18\linewidth}
\vspace{-20pt}
\begin{equation*}
    \setstacktabbedgap{0pt}
    \Matrixstack{
         & &Z& & \\
         &\swarrow& &\color{lightgray} \searrow& \\
        X_1& & & &X_2 \\
         &\searrow& &\swarrow& \\
         & &Y& & 
    }    
\end{equation*}
\end{minipage}

Now, say we intervene on $X_2$, activating the gray edge by changing the structural assignment to $X_2 := -\frac{\sqrt{2}}{2}(Z+U_2)$  with $U_2\sim\mathcal N(0,1)$. The variance of $X_2$ is still $1$, but now it has a covariance with $X_1$ of $\mathbb E[(Z)(-\frac{\sqrt{2}}{2}Z)]=-\frac{\sqrt{2}}{2}$.
The physical process determining $Y$ should not change, but $Y$'s variance is now $\mathbb E[X_1^2] + 2\mathbb E[X_1X_2] + \mathbb E[X_2^2] + 1 = (2) + 2(-\frac{\sqrt{2}}{2}) + (1) +1 = 4-\sqrt{2}$, 
and its fraction of cause-explainable variance is $\frac{3-\sqrt{2}}{4-\sqrt{2}} = \frac{10-\sqrt{2}}{14}$. Though the actual functional parameters for \(Y\) do not respond to the correlation of $Y$'s parents, because the total variance has changed, the fraction of explainable variance changes, and all \textit{standardized} functional parameters must be scaled by the old standard deviation divided by the new standard deviation (see Section \ref{SCM}). Since the scalar is the same for all parameters, the \textit{relative} magnitudes and signs of the parameters (the ``mechanisms") will not change, and so we would not want the distribution of these relationships in randomly sampled standardized parameters for $Y$ to reflect the correlation between $X_1$ and $X_2$ (the ``causes"), consistent with the principle of separation of cause and mechanism.

\vspace{-6pt}
\subsection{Desired SCM-Generation Characteristics}\label{defs}
In Section \ref{marginalization}, we find that the variance of a variable is independent of modeled causal structure. Since its value depends on the chosen units and units are not specified, it is arbitrary.
In Section \ref{marginalization}, we find that the fraction of unexplained variance is independent of the number of non-parent ancestors, and in Section \ref{remote}, we find that the functional parameters are independent up to uniform scaling of the influence of remote structural features and functional parameters on the correlation of parents. We conclude that the distribution of functional parameters produced by an SCM generation process should depend only on the number of parents (local structure). This can be seen as a generalization of the local Markov property (which relates data to the SCM that generated it) to the SCM generation process itself. 
Finally, clearly, no physical limit on the ratio between causal coefficients and noise variance exists.

\newtheorem{dfn}{Definition}

\begin{dfn}
    An \emph{\textbf{SCM generation process}} over nodes $V$ and some function space with parameterization $\vec \Theta$ defines the distribution $\mathcal P(\vec \Theta | E) = \Pi_i \mathcal P(\vec \Theta_i|E,\vec \Theta\setminus\vec \Theta_i)$ for every (random) input graph $\mathcal G=(V,E)$, where $\vec \Theta_i$ is a vector of the parameters of the structural assignment for $X_i\in V$. It is...
\end{dfn}

\vspace{-6pt}
\begin{dfn}\label{def:unitless}
\emph{\textbf{unitless}} if $\forall E, X_i\in V$, $\mathbb E[X_i]=0,\mathbb E[X_i^2]=1$. 
\end{dfn}

\vspace{-6pt}
\begin{dfn}
\emph{\textbf{Markov-consistent}} if for any adjacency matrices $E,E'$ and nodes $X_i,X_j\in V$ with the same number of parents, $\exists c\in \mathbb R | c \neq 0 \wedge \mathcal P(\vec{\tilde \Theta}_i) = \mathcal P(c \odot \vec{\tilde \Theta}_j)$ where $\odot$ is the \emph{scaling product} for the parameterization $\vec \Theta$ and $\vec{\tilde \Theta}_i$ are elements of $\vec \Theta_i$ that are random given $E_i$ (see Appendix \ref{Appendix Scaling}).
\end{dfn}

\vspace{-6pt}
\begin{dfn}
\emph{\textbf{unrestricted}} if the distribution of the ratio of variance due to a single parent and to noise for each node has full support on $(0,\infty)$. 
\end{dfn}

\begin{multicols}{2}

\section{Method}
We propose Algorithm \ref{iid_alg}. If a variable $\hat{X}_i$ in a standardized system has $d_i$ independent parents, we may imagine its variance as a unit \(d_i\)-ball where each Cartesian coordinate represents the causal coefficient and standard deviation from one of the parents, and $1-r^2 < 1$ is the variance due to noise. We draw coefficients randomly and independently by sampling uniformly from this \(d_i\)-ball, which can be accomplished by drawing \(a_{ji}^{\prime\prime} \sim \mathcal N(0,1)\) iid from the standard normal distribution and scaling \(\vec{a_i}^{\prime\prime} = (a_{ji}^{\prime\prime})_j\) by \(r_i/||\vec{a_i}^{\prime\prime}||\), where \(r_i \sim \mathcal U^{\sfrac{1}{d_i}}(0,1)\) is drawn from the \(d_i\)\textsuperscript{th} root of the uniform distribution from 0 to 1 \citep{d-ball}. The fraction of the volume located near the outside of a \(d_i\)-ball increases with \(d_i\), so the expected fraction of variance due to noise is inversely related to the number of parents (average noisiness can be adjusted via the distribution for $r_i^{d_i}$), but recalling Section \ref{marginalization}, this is not concerning. To ensure unit variance with correlated parents, we standardize (because \(a_{ji}'=a_{ji}^{\prime\prime}=0\) when \(\hat{X}_j \notin pa(\hat{X}_i) \subseteq \hat{X}_{<i}\), we only need $\hat{X}_{<i}$ to do this).

\begin{algorithm}[H]\caption{UUMC SCM Generation}\label{iid_alg}

\begin{algorithmic}
    \Require a graph $\mathcal G=\left(\{\hat{X}_i\}, E\right)$ with topological order $i \in 1...N$ and adjacency matrix $E = (e_{ji}) \in {\{0,1\}}^{N\times N}$ with $N \in \mathbb N$
    \Ensure  $j\geq i \Rightarrow e_{ji}=0$
\end{algorithmic}
    $R = (\rho_{ji}) \gets \mathbb{I}_N$\; \\
    $\vec{s} = (\hat{s}_i) \gets \mathbf{1}_N$\; \\
    $\hat{A} = (\hat{a}_{ji}) \gets E$\; \\
    \For{$i \in 1...N$}{
        $d_i \gets \sum_j \vec{e}_i = \#pa(\hat{X}_i)$ \\ 
        \If{$d_i>0$}{
            draw $\vec{a}_i^{\prime\prime} \sim \mathcal N(0,1)^N$ \\
            draw $r_i \sim \mathcal U^{\sfrac{1}{d_i}}(0,1)$ \\
            $a_{ji}^{\prime\prime} \gets a_{ji}^{\prime\prime}e_{ji} \; \forall j \in 1...N$ \\
            $\vec{a}_i' \gets \frac{r_i}{\sum_j a_{ji}^{\prime\prime 2}}\vec{a}_i^{\prime\prime}$; $\;s_i' \gets \sqrt{1-r_i^2}$ \\
            $\sigma_i^{\prime} \gets  \sqrt{\vec{a_i'}^{T}R \vec{a_i'} + s_i^{\prime2}}$ \\
            $\hat{a}_{ji} \gets \frac{a_{ji}'}{\sigma_i'} \; \forall j\in 1...N$; $\;\hat{s}_i \gets \frac{s_i'}{\sigma_i'}$ \\
            $\rho_{ij} = \rho_{ji} \gets \sum_k \hat{a}_{ki}\rho_{jk} \; \forall j<i$
        }
    }
\Return{$\hat A$}
\end{algorithm}
\end{multicols}

\newtheorem{thm}{Theorem}
\begin{thm}
    Algorithm \ref{iid_alg} is unitless, unrestricted, and Markov-consistent.
\end{thm}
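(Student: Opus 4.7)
For \emph{unitless}, I would proceed by induction on the topological index $i$. The base case is a root: the loop leaves $\hat s_1=1$, $\hat a_{j1}=0$ for all $j$, and $\hat X_1=\hat U_1\sim\mathcal N(0,1)$. For the induction step I would assume $\mathbb E[\hat X_k]=0$ and $\mathbb E[\hat X_k^2]=1$ for all $k<i$, together with the side claim that the entries $\rho_{jk}$ stored in the algorithm's matrix $R$ already equal the true correlations $\mathbb E[\hat X_j\hat X_k]$ for $j,k<i$. Linearity and mean-zero noise immediately give $\mathbb E[\hat X_i]=0$. For the variance I would plug the algorithm's definitions into
\[
\mathbb E[\hat X_i^2]=\sum_{j,k}\hat a_{ji}\hat a_{ki}\rho_{jk}+\hat s_i^2=\frac{\vec{a_i'}^{\,T}R\,\vec{a_i'}+s_i'^{\,2}}{\sigma_i'^{\,2}},
\]
which equals $1$ by the very definition of $\sigma_i'$. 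The final line of the loop then extends $R$ consistently, since $\mathbb E[\hat X_j \hat X_i]=\sum_k \hat a_{ki}\mathbb E[\hat X_j \hat X_k]=\sum_k \hat a_{ki}\rho_{jk}$, which closes the induction.

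For \emph{unrestricted}, the ratio of variance contributed by a single parent $\hat X_j$ to the noise variance is, using $\rho_{jj}=1$ (preserved from the initialization $R\gets\mathbb I_N$),
\[
\frac{\hat a_{ji}^{\,2}\rho_{jj}}{\hat s_i^{\,2}}=\frac{a_{ji}'^{\,2}}{s_i'^{\,2}}=\frac{r_i^{\,2}}{1-r_i^{\,2}}\cdot\frac{a_{ji}''^{\,2}}{\lVert\vec a_i''\rVert^{\,2}}.
\]
The first factor has full support on $(0,\infty)$ because $r_i$ is drawn with a density supported on all of $(0,1)$; the second factor is the square of a single coordinate of a uniformly distributed unit vector in $\mathbb R^{d_i}$ and has a density supported on $(0,1)$ for any $d_i\ge 1$. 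Independence of the two factors by construction then yields full support on $(0,\infty)$ for their product, and the noise standard deviation $\hat s_i>0$ almost surely.

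For \emph{Markov-consistent}, the structural observation is that the pre-normalization draws---namely $\vec a_i''$ restricted to parent coordinates together with $r_i$---depend only on $d_i$; nothing about the rest of $E$ or the already-constructed entries of $R$ is consulted before the definition of $\sigma_i'$. So whenever $d_i=d_j$, the restricted pre-normalization vectors $(\vec a_i^{\,(\mathrm{pa})\prime},s_i')$ and $(\vec a_j^{\,(\mathrm{pa})\prime},s_j')$ are identically distributed. The only remaining transformation is the simultaneous division of every entry by the scalar $\sigma_i'$, which is precisely the operation captured by $c\odot(\cdot)$ of Appendix~\ref{Appendix Scaling}; the two distributions therefore differ only by a $c\odot$ step for an appropriate choice of $c$. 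I expect the main obstacle to be formal rather than computational: $\sigma_i'$ is itself a random function of the parent submatrix of $R$ and of the parent draws, not a deterministic constant, so the proof has to invoke the definition of the scaling product as identifying parameter vectors that share the same mechanism direction, matching Section~\ref{remote}'s principle that only relative magnitudes are invariant under remote structural changes.
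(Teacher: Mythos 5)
Your proposal is correct and follows essentially the same route as the paper's own proof: the identical variance computation $\hat\sigma_i^2=(\vec{a_i'}^{T}R\vec{a_i'}+s_i'^2)/\sigma_i'^2=1$ for unitlessness (you merely make the induction and the correctness of $R$ explicit), the same $r_i^2/(1-r_i^2)$ ratio argument for unrestrictedness, and the same observation that the pre-normalization draws depend only on $d_i$ with the final step being a uniform rescaling for Markov-consistency. The formal subtlety you flag about $\sigma_i'$ being random rather than a fixed constant $c$ is real, and the paper's proof glosses over it in the same way you do.
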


\begin{proof}
    Assume that $\hat X_{<i}$ are standardized. Then we have $\hat \mu_i = \sum_j \hat a_{ji} (0) + (0) = 0$ and $\hat \sigma_i^2 = \frac{1}{\sigma_i^{\prime 2}}\left( \mathbb E\left[\left(\sum_{j} a'_{ji}\hat X_j\right)^2\right] + (s_i')^2\right) = \frac{\sum_{jk}a'_{ji}a'_{ki}\rho_{jk} + s_i^{\prime2}}{\vec{a_i'}^{T}R \vec{a_i'} + s_i^2} = 1$, so Algorithm \ref{iid_alg} is unitless.  
    It is Markov-consistent because $\vec{a_i''}$ are drawn identically and independently, the distribution for $r$ depends only on $d_i$ (the number of parents), and the final modification is scaling all parameters by $\sigma_i'||\vec{a_i}^{''}||$. 
    Finally, Algorithm \ref{iid_alg} is unrestricted because the ratio of the variance due to an individual parent to that due to noise is proportional to $\frac{r_i^2}{1-r_i^2}$, which approaches 0 as $r\rightarrow0$ and approaches $\infty$ as $r\rightarrow1$.
\end{proof}
\vspace{-3pt}
\begin{remark}
    While the iSCM is also Markov-consistent, it is not unitless because it generates data that is standardized for a finite sample rather than in the infinite sample limit, and it is not unrestricted because it produces SCMs where the ratio of the magnitude of a causal coefficient $|\hat{a}_{ji}|$ to the noise standard deviation $\hat{s}_i$ for $\hat{X}_i$ is uniformly distributed on the interval $[0.5, 2]$. 
\end{remark}

\begin{figure}[b]
    \includegraphics[scale=0.6]{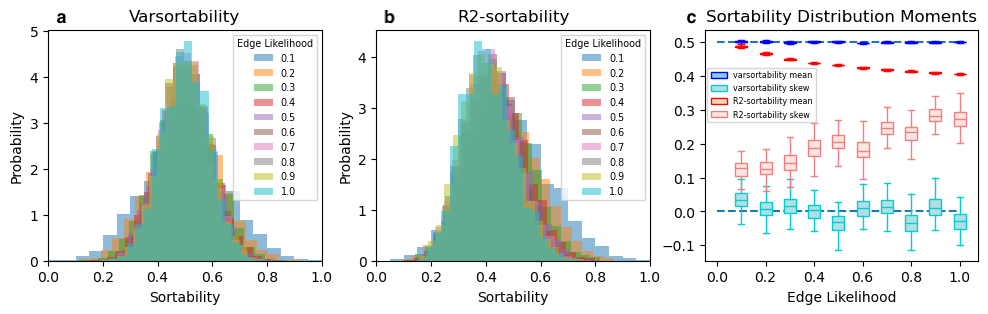}
    \vspace{-6pt}
    \caption{Sortability probability distribution functions for static ER graphs with 20 nodes and varying edge probabilities, based on 5000 randomly generated SCMs and 100 data samples.}
    \label{SortabilityStats}
\end{figure}

\vspace{-6pt}
\section{Experiments}
\vspace{-3pt}
\subsection{Sortability Properties}
\vspace{-3pt}
\begin{figure}
\begin{minipage}{0.61\linewidth}
    \includegraphics[scale=0.55]{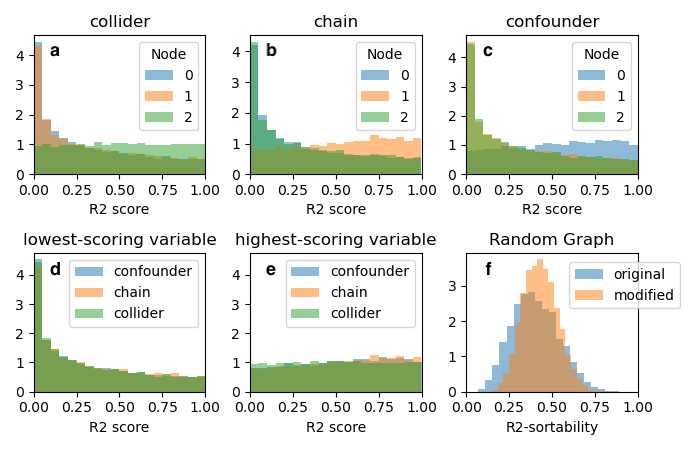}%
    \vspace{-6pt}
     \captionof{figure}{\textbf{a-c} R2 scores for nodes in the three kinds of unshielded triples, labeled by topological order, based on 10000 random SCMs and 500 samples. \textbf{d} Highest and \textbf{e} lowest scoring node from each triple. \textbf{f} R2-sortability for ER graphs with 20 nodes and edge probability 0.5, based on 5000 random SCMs and 100 samples, according to the original (blue) and modified (orange) sortability metrics (see Section \ref{varsortability def}).} \label{triples}
\end{minipage} \hfill
\begin{minipage}{0.35\linewidth}
    \vspace{18pt}
    \includegraphics[scale=0.6]{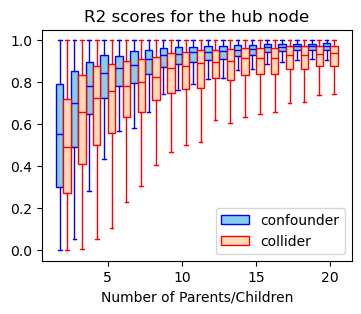}%
    \vspace{18pt}
     \captionof{figure}{R2 scores for the hub node of colliders and confounders with different numbers of parents/children, based on 500 randomly-generated static colliders and confounders of each size and 500 simulated data points.}
     \label{hub}
\end{minipage}
\end{figure}

As intended, our SCM generation method results in an expectation of neutral varsortability for ER graphs (Figure \ref{SortabilityStats}a and c). Though the SCM is constructed to be standardized in the infinite sample limit, finite sample size still leads to a wide range of simulated varsortability values. Likewise, this data-generation method produces SCMs with a realistically-wide range of R2-sortability (Figure \ref{SortabilityStats}b). This is because, in the absence of varsortability, R2 scores are highest for the node with the most neighbors, regardless of their causal order. We confirm this by examining the R2 scores of nodes in the three types of unshielded triples in Figure \ref{triples}a-c. The nodes are labeled by topological order; the nodes with 2 neighbors include node 2 for the collider, node 1 for the chain, and node 0 for the confounder. The expected R2 scores for nodes with only one neighbor are low in each case, while the expected R2 score for the node with 2 neighbors is higher, favoring nodes at different positions in the topological order depending on the connectivity of the graph. 

When confounders and colliders occur with equal frequency, one might expect the R2-sortability tendencies for these different structures to exactly cancel out. Instead, we see a positive skew (Figures \ref{triples}f and \ref{SortabilityStats}b), meaning our data is \textit{reverse} R2-sortable on average, similar to the iSCM and IPA approaches (Figure \ref{fig:data_gen}e and k). This holds for random graphs of all densities (Figure \ref{SortabilityStats}b-c). Should we expect such an artifact? Analytically, yes. The expected R2 score is not the same for the middle node of different types of triples (proof in Appendix \ref{Appendix R2}). For a collider such as \(\hat{A}\rightarrow \hat{B} \leftarrow \hat{C}\), the R2 score for \(\hat{B}\) is upper-bounded by \(1-\hat{s}_b^2\), because there is no information in \(\hat{A}\) or \(\hat{C}\) that could possibly help predict \(\hat{U}_B\). However, for a confounded triple such as \(\hat{A} \leftarrow \hat{B} \rightarrow \hat{C}\), the R2 score is actually lower-bounded by \(1-\min{(\hat{s}_A^2, \hat{s}_C^2)}\), and can get arbitrarily close to 1. This is because all information from the parent influences every child. While that information is obscured by the noise terms of the children, these noise terms are mutually independent, so they partially cancel each other during the attempted reconstruction of the parent node. 

We confirm this computationally: while R2-scores are equally distributed for nodes with one neighbor in all types of unshielded triples (Figure \ref{triples}d), the R2 score for the middle, or \textit{hub}, node of the collider (green) is on average slightly lower than that of the other structures (Figure \ref{triples}e). This pattern holds for larger structures as well. Naturally by the law of large numbers, as we increase the number of children for a confounding hub node, its expected R2 score increases (Figure \ref{hub}, blue) -- a pattern which should hold regardless of the data generation strategy. On the other hand, the behavior of the R2 score of a `colliding' hub node as we increase the number of parents (red) reflects our choice to reduce the fraction of variance due to noise for variables with more parents. 
The similarity in the relationship of R2 scores to the number of neighbors for hub nodes from these different structures gives us confidence that this was a realistic choice. Despite the similarity, the upper-bound for the collider means that it always lags the R2-score for the hub of a confounder. 

\noindent
\begin{minipage}{0.49\linewidth}
\setlength{\parindent}{2em}
\vspace{-12pt}
\subsection{Causal Discovery Evaluation}
In Figure \ref{fig:CD}, we examine the impact of data generation strategy on the apparent performance of various static causal discovery algorithms (mostly as implemented in gCastle \citep{gcastle}). PC(-stable) are constraint-based methods relying on conditional independence testing \citep{PC} and the \emph{faithfulness assumption} (that connections in the causal graph manifest as statistical relationships in the data) to discover partially-oriented equivalence classes. The other methods learn fully-oriented graphs; Greedy Equivalence Search (GES) is score-based \citep{GES}, NOTEARS is gradient-based \citep{NOTEARS} like the winner of the C4C competition \citep{winner}, and Best Order  Score Search (BOSS) is a 

\end{minipage}\hfill
\begin{minipage}{0.49\linewidth}
\begin{figure}[H]
    \centering
    \includegraphics[width=\linewidth]{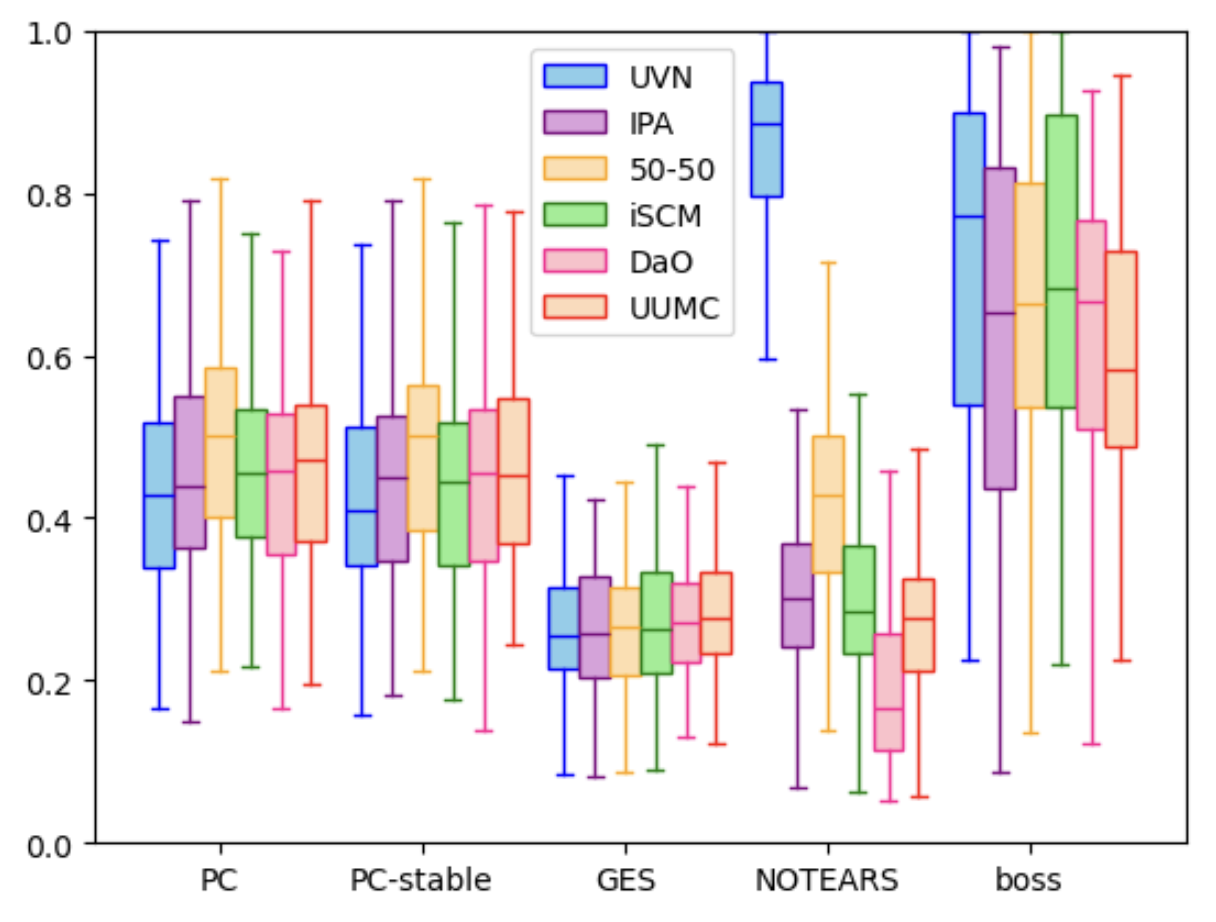}
    \vspace{-28pt}
    \caption{F1-score comparison of static causal discovery algorithms on benchmark data generated in different ways (see Section \ref{benchmark}) based on 100 SCMs with 10 variables and 500 data samples. ``UUMC" is the method proposed here.\\}
    \label{fig:CD}
\end{figure}
\end{minipage} 

\vspace{-12pt}
\noindent permutation-based algorithm that uses a score based on conditional independence constraints and relaxes the faithfulness assumption by relying more strongly on causal sufficiency. 

In blue, we show the performance of these algorithms on data generated in the standard way (denoted UVN for unit-variance-noise, see Section \ref{Existing}) according to gcastle's directed F1 score. NOTEARS, which was introduced accompanied by an evaluation on UVN data, far outperforms the other algorithms on this dataset, followed closely only by BOSS. Note that the apparent performance of PC(-stable) is limited because the F1 score is not implemented for equivalence classes.

In purple, yellow, green, and pink, we present the performance of these algorithms on data generated according to the four proposals for addressing sortability artifacts discussed in Section \ref{remedies}. 
By far the most dramatic change is the reduced performance of NOTEARS for any of these approaches, all of which reduce varsortability, and especially for DaO (pink), which exhibits reverse varsortability (Figure \ref{fig:data_gen}). Though not nearly as dramatic, the performance of BOSS is also markedly reduced for all modified data generation approaches. Within the modified approaches, the performances of score-search algorithms like GES and BOSS remain relatively constant, but the constraint-based algorithms perform noticeably better on the 50-50 data (yellow), likely because this approach ensures that causal effects are not overshadowed by noise while avoiding computational faithfulness violations due to near-determinism \citep[see][for a discussion of the relevant assumptions]{runge2018causal}. Interestingly, NOTEARS also performs better on the 50-50 data, suggesting that it is more effective when the true fraction of explainable variance is similar for all variables.

Finally, the performance of these algorithms on data from UUMC SCMs generated using Algorithm \ref{iid_alg} is shown in red. Compared to the iSCM, our approach yields a marginally higher performance for the PC algorithms and a marginally lower performance for NOTEARS, slightly increasing the apparent marginal performance of constraint-based relative to gradient-based algorithms. Furthermore, while BOSS has the highest performance of all the algorithms on data that is not varsortable, it performs notably worse on the UUMC data than any of the other modified approaches, materially reducing the apparent performance improvement provided by BOSS.

\noindent
\begin{minipage}{0.66\linewidth}
\setlength{\parindent}{2em}

\noindent
\section{Extension to Time Series Data}\label{TS Exp Intro}

Extending the method to the time series setting is non-trivial. Even if we insist that our method reduces to the static case when $\tau_{\max}=0$ thus forcing all dependencies to be contemporaneous, the extension to lagged (auto-)dependencies is not obvious. Note, for instance, that the discrete auto-dependencies in an SVAR model (equation \ref{eq:SVAR}) are naturally unitless---$a_{ii}(\tau) = \hat{a}_{ii}(\tau)$, where $\hat{a}_{ii}(\tau)$ are the auto-dependencies in the unitless SVAR model defined by eq.~\ref{eq:uSVAR}---while other lagged dependencies ($a_{ji}(\tau), \; j\neq i, 0<\tau\leq\tau_{\max}$) are unit-dependent. Furthermore, we can no longer simply use coefficients from the causal model to represent the contribution of a parent process to its child, since the parent may affect the child process via multiple lags.

\end{minipage}
\hfill
\begin{minipage}{0.3\linewidth}
\begin{figure}[H]
    \includegraphics[width=\linewidth]{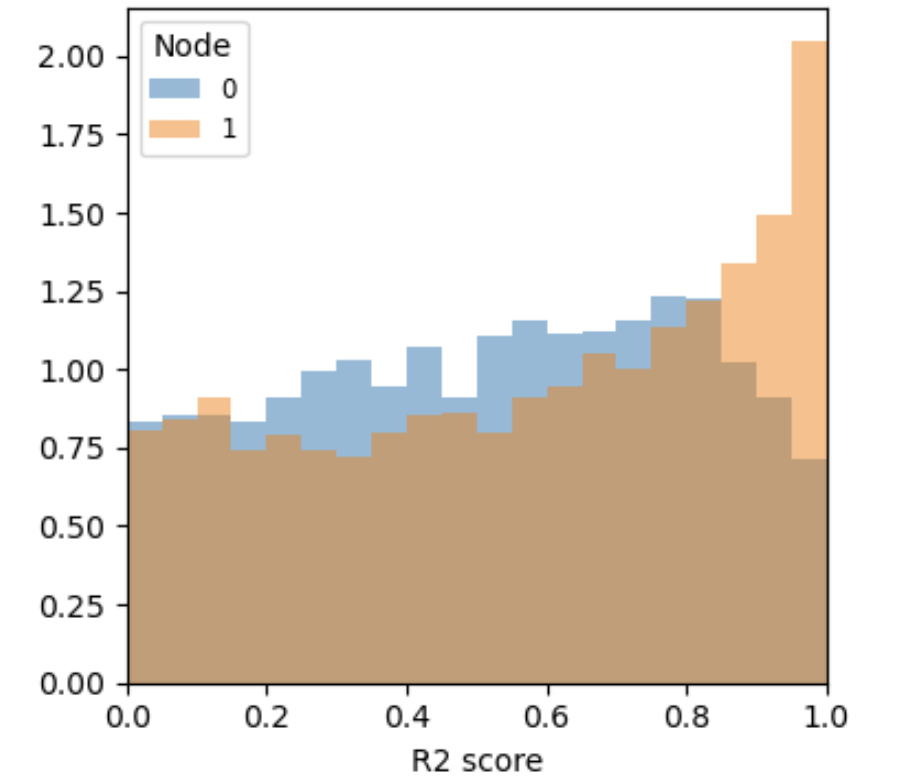}
    \caption{R2-scores for random SCMs corresponding to the time series graph given by $\hat{X}_0(t-1)\rightarrow \hat{X}_0(t)\rightarrow \hat{X}_1(t)\rightarrow \hat{X}_1(t+1)$.}
    \label{fig:TS}
\end{figure}
\end{minipage}

In Appendix \ref{Time Series Appendix}, we propose a theoretical foundation for drawing causal parameters based on a comparison to continuous processes. We approximate the contribution of a parent process with the sum of the causal coefficients at all lags, inspired by the direct transfer function at frequency 0 
\citep[see Appendix \ref{mult} and][]{Nicolas}. Auto-dependence is drawn from $\mathcal U(0,1)$, the fraction of variance due to noise is drawn relative to the part of the total variance not due to auto-dependence.

 Both auto- and cross-dependencies are adjusted to respond to a random sampling rate, inspired by discrete sampling of Ornstein-Uhlenbeck processes (see Appendix \ref{OU}), and we show that hidden confounding from subsampling can be bounded based on the sampling frequency. Finally, we solve a system of equations yielding the (auto-)covariances and scaling parameters for noise and the cross-coefficients to achieve standardized time series. The full method is given in Algorithm \ref{ts_alg}. It is designed to produce datasets with unit variance, but R2-sortability tendencies may emerge. 

Figure \ref{fig:TS} shows that for a simple 2-variable SVAR process with lag-1 auto-dependencies and $\hat{X}_0(t)\rightarrow \hat{X}_1(t)$, the distribution of R2 scores for the source node (blue) differs from that of the target node (orange). On the whole, time series datasets generated this way tend toward R2-sortability.

\vspace{-12pt}
\section{Discussion}
We examine data generation techniques that aim to avoid var- and R2-sortability, and evaluate their success in removing these signatures of the underlying causal structure. Notably, the Dag-Onion method---which is meant to naturally avoid nonphysical artifacts and ensure simulation of data with all possible correlation structures---results in strong reverse varsortability and different R2-sortability tendencies for graphs of different densities. Also, while the iSCM eliminates varsortability, it demonstrates mild reverse R2-sortability that is stronger for denser graphs. Is this a problem? 

We propose a method for sampling Structural Causal Models (SCMs) that is based on theoretical arguments that appropriately distinguish between physical properties of a system and arbitrary modeling choices such as the number of included ancestors or the choice of units. We find that its R2-sortability properties are similar to those of the iSCM. For static graphs, we demonstrate both theoretically and computationally that a node's R2-scores on average reflect its connectivity, but that they are a poor indicator for the topological order. Thus, we assert that while a tendency for varsortability is a nonphysical artifact of standard data generation processes, R2-scores do reflect aspects of the causal structure in any system that can be represented by an SCM. To the extent that R2-scores are related to the topological order, \textit{reverse} R2-sortability is more likely than R2-sortability, because the fact that SCMs are not deterministic limits how well one can predict a child, but multiple children can serve as independent observations of a parent, producing arbitrarily good reconstructions of the parent in the limit of many children. 

Despite having similar sortability properties, when deployed for evaluation of causal discovery algorithms, our approach results in mild improvements for constraint-based over gradient-based algorithms and a dramatic drop in the performance of the permutation-based algorithm BOSS when compared to evaluations using the iSCM. This provides evidence that artifacts other than var- and R2-sortability, such as restrictions on the ratios of functional parameters, can affect the absolute and relative performances of causal discovery algorithms even when they do not explicitly exploit these artifacts, and highlights the need for intentional SCM sampling methods like that proposed here, rather than attempting to remove one artifact at a time from standard sampling methods.

Finally, we take a first step toward intentional sampling of SVAR models. Surprisingly, we find that our method produces time series datasets that are slightly R2-sortable on average. 
Our method accounts for additional noise due to subsampling, but neglects the covariance of the noise terms in multivariate systems. Though we bound the magnitude of hidden confounding in terms of the sampling rate, there is a need to rigorously address the issue of pervasive hidden-confounding that is a necessary consequence of subsampling continuous systems.

\vspace{-12pt}
\acks{RH, JW, UN and JR received funding from the European Research Council (ERC) Starting Grant CausalEarth under the European Union’s Horizon 2020 research and innovation program (Grant Agreement No. 948112). JW received support from the German Federal Ministry of Education and Research (BMBF) as part of the project MAC-MERLin (Grant Agreement No. 01IW24007). This work used resources of the Deutsches Klimarechenzentrum (DKRZ) granted by its Scientific Steering Committee (WLA) under project ID 1083.}

\pagebreak
\bibliography{bibliography}

\appendix
\renewcommand\thefigure{A\arabic{figure}}
\renewcommand\thealgorithm{A\arabic{algorithm}}
\setcounter{figure}{0} 
\setcounter{algorithm}{0} 

\vspace{-6pt}
\section{The Scaling Product}\label{Appendix Scaling}
\begin{dfn}
    $\vec \Theta$ is a \emph{\textbf{parameterization}} of some stochastic functional space $\mathcal F$ over variables $V$ if any stochastic structural assignment $X_i:=f(V)$ with $f\in \mathcal F$ can be notated by a vector of parameters $\vec \Theta_i$.
\end{dfn}

\noindent The space of linear additive Gaussian noise functions defined by Equation \ref{eq:SCM} can be parameterized by a vector consisting of causal coefficients followed by the noise mean and standard deviation. For a node $X_i$ in a graph with $N$ nodes, $\vec \Theta_i = (a_{1i},\dots,a_{Ni}, m_i, s_i)$. To extract the parameters that are random variables, we have the shorter vector $\vec{\tilde \Theta}_i = (a_{ji} | X_j\in pa(X_i),m_i,s_i)$.

\begin{dfn}\label{SP}
    The \emph{\textbf{scaling product}} $\odot$ associated with a parameterization $\vec \Theta$ of a functional space $\mathcal F$ is defined such that $f_i\in \mathcal F \Rightarrow cf_i \in \mathcal F$ can be expressed $c\odot \vec \Theta_i \; \forall c\in \mathbb R$. 
\end{dfn}

\noindent The scaling product for the linear additive Gaussian noise functional parameterization discussed above is simply element-wise multiplication because $f_i(V) = \sum_{j=1}^{\#V} a_{ji}X_j + U_i, \; U_i\sim\mathcal N(m_i,s_i) \Rightarrow cf_i = c(\sum_{j=1}^{\#V} a_{ji}X_j + U_i) = \sum_{j=1}^{\#V} (ca_{ji})X_j + U'_i, \; U'_i\sim\mathcal N(cm_i,cs_i)$ if $\mathcal N(m_i,s_i)$ is defined as the normal distribution centered at $m_i$ with standard deviation $s_i$. However, if $\mathcal N(m_i, s_i)$ is defined as the normal distribution with \emph{variance} $s_i$, then $\odot$ is defined such that $c \odot \vec \Theta = (ca_{1j},\dots,ca_{Nj},cm_i,\sqrt{c}s_i)$. Any parameterization will induce such a scaling product.

\newpage
\section{Modified Sortability Definitions}\label{alg:sort}

Here, we present code for calculating sortability which is modified from \cite{Beware} [\url{https://github.com/CausalDisco/CausalDisco/blob/main/LICENSE}] such that it (1) accepts cyclic graphs and avoids comparing nodes within the same cyclic component (lines 13-17, 29, and 38-39), and (2) compares each pair of nodes connected by a causal path exactly once (lines 24, 28, and 35 are added, and \texttt{check\_now} replaces \texttt{Ek} in lines 30-34). Then, in Figure \ref{fig:R2ex}, we demonstrate the difference between the time series extensions of R2-sortability proposed by \cite{Christopher}, denoted R2*-sortability, and that proposed here, denote R2-sortability. 

\begin{python}
def sortability(M, W, tol=1e-9):
    '''Takes a 1 x d metric M, such as variance or R2, and an N x N adjacency 
    matrix W, where the j,i-th entry corresponds to the edge weight for j->i,
    and returns a value indicating how well M reflects the causal order.
    Modified from <https://github.com/Scriddie/Varsortability> to avoid double-
    counting node pairs connected by causal paths of multiple lengths, and to 
    accept graphs with cycles in the manner described by Christopher Lohse and 
    Jonas Wahl in "Sortability of Time Series Data" (Submitted to the Causal 
    Inference for Time Series Data Workshop at the 40th Conference on 
    Uncertainty in Artificial Intelligence). '''
    E = W != 0
    #Find ancestral relationships to avoid comparison within cycles
    Ek = E.copy()
    anc = Ek.copy()*False
    for path_len in range(self.N):
        anc = anc | Ek
        Ek = Ek.dot(E)
    #reset Ek to keep track of paths of various lengths
    Ek = E.copy()

    n_paths = 0
    n_correctly_ordered_paths = 0
    
    checked_paths = Ek.copy()*False

    for path_len in range(E.shape[0] - 1):
        check_now = (Ek 
                     & ~ checked_paths # to avoid double counting
                     & ~ anc.T) #to avoid comparison within a cycle
        n_paths += check_now.sum()
        n_correctly_ordered_paths += (check_now * M / M.T > 1 + tol).sum()
        n_correctly_ordered_paths += 1/2*(
            (check_now * M / M.T <= 1 + tol) *
            (check_now * M / M.T >=  1 - tol)).sum()
        checked_paths = checked_paths | check_now
        Ek = Ek.dot(E) #examine paths of length path_len+=1

    if n_paths == 0: #in case all nodes are in the same cycle
        return 0.5
    return n_correctly_ordered_paths / n_paths
\end{python}

\begin{figure}[H]
    \centering
    \includegraphics[width=0.9\linewidth]{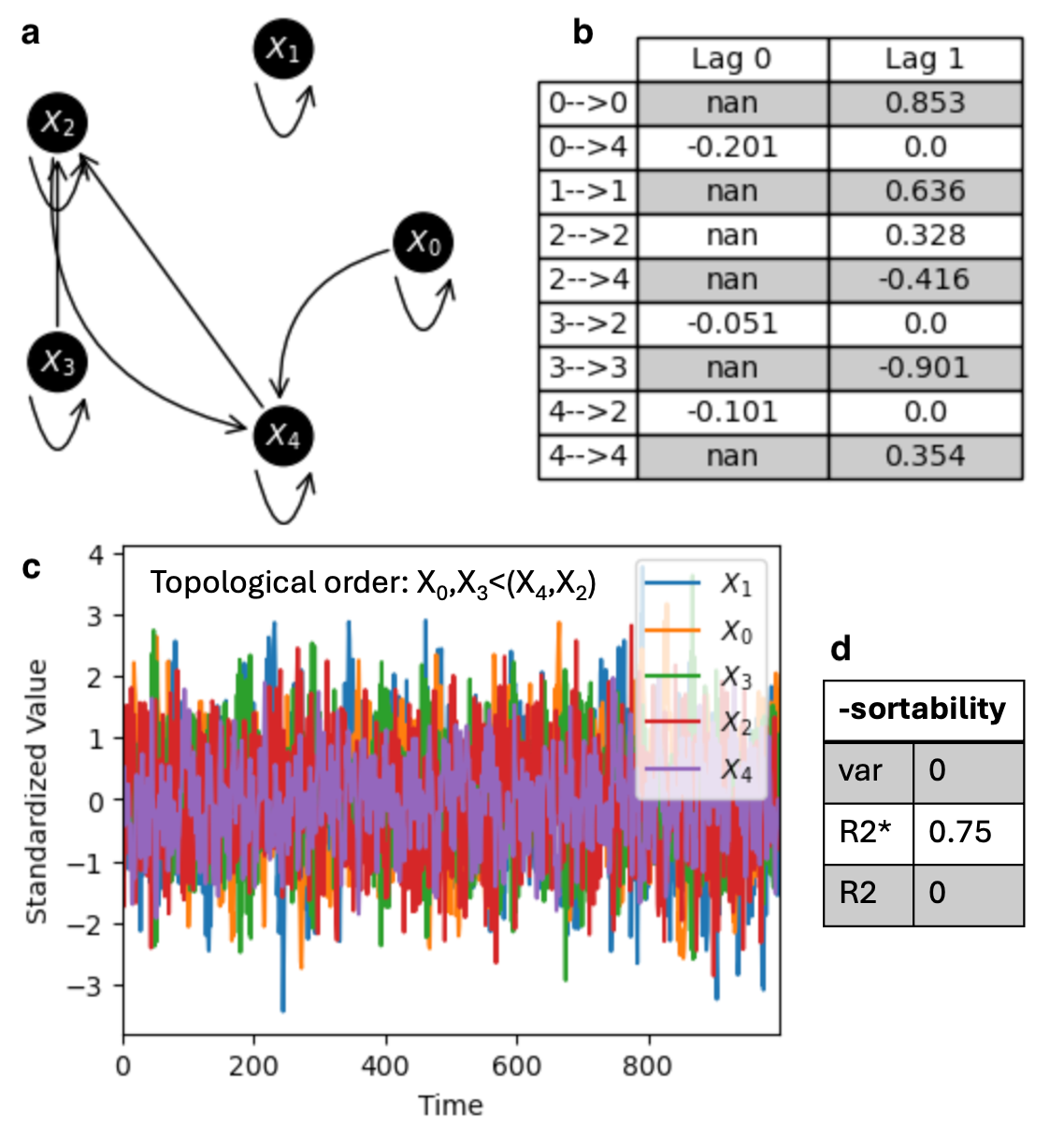}
    \caption{\textbf{An example SVAR model where reverse-varsortability is associated with R2*-sortability, but with reverse-R2-sortability.} \textbf{a} The summary graph for the underlying SVAR model with $N=5$ and $\tau_{\max}=1$. \textbf{b} The coefficients from the underlying SVAR model that are associated with each edge from the summary graph, where different lags appear in different columns. In this table, $0.0$ and \texttt{nan} appear when there is an edge that exists at one lag but not the other. \texttt{nan} is used instead of $0.0$ if a 0-lag edge in that orientation would run counter to the topological order of the variables in the SVAR model. \textbf{c} Simulated data for this SVAR process, where the time series are ordered by decreasing variance for best visibility. The topological order of the variables according to the summary graph is written at the top of this plot. \textbf{d} Time series var-, R2*-, and R2-sortability for this dataset.} 
    \label{fig:R2ex}
\end{figure}

\section{Benchmark Sortability Properties}
Figure \ref{fig:data_gen} compares the sortability properties of five data generation methods discussed in Sections \ref{Existing} and \ref{remedies}. 
The standard unit-variance-noise approach \citep[first row]{NOTEARS} produces highly var- and R2-sortable datasets. The IPA approach \cite [second row]{Mooij-DG} reduces and reverses the sortability tendencies of the data, while the 50-50 approach \cite[third row]{Squires-DG} practically eliminates trends in varsortability but produces strongly reverse-R2-sortable data. The iSCM \citep[fourth row]{iSCM} eliminates varsortability and produces data that is gently, but noticeably, reverse-R2-sortable. Finally, DaO \citep[last row]{DaO} produces data that is reverse varsortable and whose R2-sortability tendencies change with density. All trends except R2-sortability for DaO become stronger for denser graphs (last column).
\begin{figure}[H]
    \centering
    \includegraphics[scale=0.8]{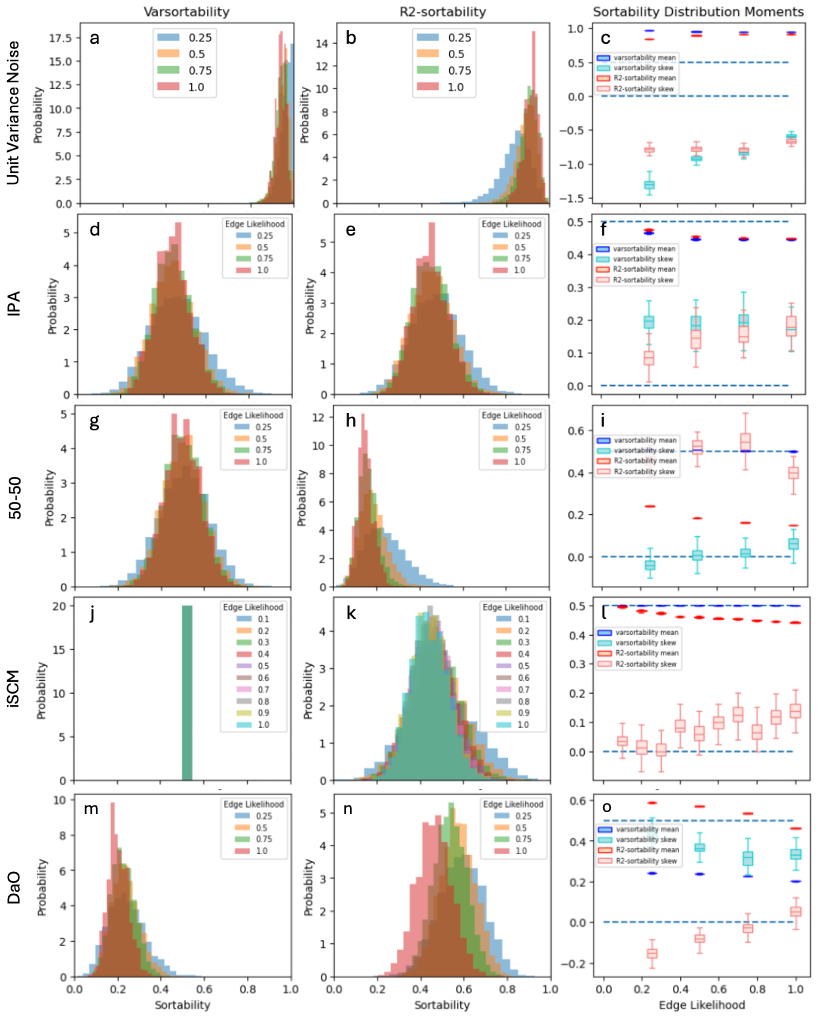}
    \caption{Var- and R2-sortability propbability distribution functions (pdfs, left and middle columns, respectively) for static ER SCMs with $N=20$ and varying densities (colors) generated via different methods described in Sections \ref{Existing} and \ref{remedies} (different rows). The means and skews of the var- and R2-sortability pdfs are displayed in the right column as a function of graph density in blue and red, respectively, where the dashed horizontal lines at 0.5 and 0 mark neutral mean and skew, respectively. Pdfs are based on 5000 randomly generated SCMs with 100 observations.}
    \label{fig:data_gen}
\end{figure}

\newpage
\section{R2 Asymmetry} \label{Appendix R2}
\subsection{R2-score}
The R2-score for a variable $\hat{X}_i$ in a standardized dataset $\boldsymbol{X}\in \mathbb R^{M \times N}$ with variables $\{\hat{X}_i|i\in 1...N\}$ and $M$ samples is \[R^2(\hat{X}_i) = 1-\text{var} (\hat{X}_i-E[\hat{X}_i|\{\hat{X}_j|j\neq i\}])\]
We can estimate $E[\hat{X}_i|\{\hat{X}_j|j\neq i\}]$ using linear regression. The solution to the regression \newline \(\mathbf y=\mathbf X\beta + \mathbf \epsilon\) is \[\beta=(\mathbf X^T\mathbf X)^{-1}\mathbf X^T\mathbf y\]
yielding residuals
\begin{align*}
    \epsilon &= \mathbf y - \mathbf X\beta = \mathbf y - \mathbf X(\mathbf X^T\mathbf X)^{-1}\mathbf X^T\mathbf y = (\mathbf I_M - \mathbf X(\mathbf X^T\mathbf X)^{-1}\mathbf X^T)\mathbf y
\end{align*}
that estimate $\hat{X}_i-E[\hat{X}_i|\{\hat{X}_j|j\neq i\}]$. 

\subsection{SCMs for Unshielded Triples}
We examine an SCM with standardized variables \(\{\hat{A}, \hat{B}, \hat{C}\}\) and noise terms \(\hat{U} = [\hat{U}_A, \hat{U}_B, \hat{U}_C]\), where $\mathbf{U} = [\mathbf{U_A}, \mathbf{U_B}, \mathbf{U_C}] \in \mathbb R^{M \times 3}$ is a matrix consisting of \(M\) samples of noise terms drawn iid from \(\mathcal N(0,1)\), and $[\mathbf A, \mathbf B, \mathbf C]\in \mathbb R^{M \times 3}$ is the resulting dataset. 

\begin{multicols}{2}
\subsubsection{Collider}
If \(\hat{A}\rightarrow \hat{B} \leftarrow \hat{C}\) is a collider, it can be represented with the SCM:
\[\Biggl\{ \begin{array}{ll} \hat{A} := \hat U_A \\ \hat C := \hat U_C \\ \hat B := a\hat A + c\hat C + b\hat U_B\end{array}\]
where \(a^2 + c^2 + b^2 = 1\). The sample values for an observational dataset can be expressed: 
\[\begin{bmatrix} \mathbf A & \mathbf B & \mathbf C \end{bmatrix} := \mathbf U \begin{bmatrix} 1 & a & 0\\0&b & 0\\0 & c & 1\end{bmatrix}\]
and the covariance matrix for the dataset is
\[\begin{bmatrix}
    1 & a & 0 \\ a & 1 & c \\ 0 & c & 1
\end{bmatrix}\]

\subsubsection{Confounder}
If \(\hat{A} \leftarrow \hat B \rightarrow \hat C\) is a confounded triple, it can be represented with the SCM: 
\[\Biggl\{ \begin{array}{ll} \hat B := \hat U_B \\ \hat A := a\hat B + \bar{a}\hat U_A \\ \hat C := c\hat B + \bar{c}\hat U_C\end{array}\]
where \(\bar{x} = \sqrt{1-x^2}\). 
The sample values for an observational dataset can be expressed: 
\[\begin{bmatrix} \mathbf A & \mathbf B & \mathbf C \end{bmatrix} := \mathbf U \begin{bmatrix} \bar{a} & 0 & 0\\ a& 1 & c\\0 & 0 & \bar{c}\end{bmatrix}\]
and the covariance matrix for the dataset is
\[\begin{bmatrix}
    1 & a & ac \\ a & 1 & c \\ ac & c & 1
\end{bmatrix}\]
\end{multicols}

\subsection{R2 Properties of Hub Nodes in Unshielded Triples}
If we regress the middle node \(\mathbf B\) on \(\mathbf A\) and \(\mathbf C\) in the collider, where our regression is consistent with the causal order, the regression will recover the causal model in the infinite sample limit. Thus, we will find \(\epsilon \geq b\mathbf{U_B}\), yielding $R^2_B \leq 1-b^2$ that is upper bounded by the cause-explained variance.

If we perform the same regression on the confounder instead, then substituting $\mathbf X = [\mathbf A, \mathbf C]$ and $\mathbf y = \mathbf B$ yields 

\begin{align*}
    \mathbf X^T\mathbf X &= \begin{bmatrix} \mathbf A^T \\ \mathbf C^T \end{bmatrix}\begin{bmatrix} \mathbf A & \mathbf C \end{bmatrix} = M\begin{bmatrix}
        1 & ac \\ ac & 1
    \end{bmatrix} \\
    (\mathbf X^T\mathbf X)^{-1} &= \frac{1}{M}\frac{1}{1-a^2c^2} \begin{bmatrix} 1 & -ac \\ -ac & 1 \end{bmatrix} \\
    \mathbf X(\mathbf X^T\mathbf X)^{-1}\mathbf X^T &= \begin{bmatrix} \mathbf A & \mathbf C\end{bmatrix} \frac{1}{M}\frac{1}{1-a^2c^2} \begin{bmatrix} 1 & -ac \\ -ac & 1 \end{bmatrix} \begin{bmatrix} \mathbf A^T \\ \mathbf C^T \end{bmatrix} \\
    &= \frac{1}{M}\frac{\mathbf A\mathbf A^T+\mathbf C\mathbf C^T - ac(\mathbf A\mathbf C^T+\mathbf C\mathbf A^T)}{1-a^2c^2} \\
    \Rightarrow \epsilon &= \left( \sum_j \left(\delta_{ij} - \frac{1}{M}\frac{a_ia_j + c_ic_j-ac(a_ic_j+c_ia_j)}{(1-a^2c^2)}\right)b_j\right)_i \\
    &= \mathbf B - \frac{\sigma_{AB}\mathbf A+\sigma_{BC}\mathbf C-ac(\sigma_{BC}\mathbf A+\sigma_{AB}\mathbf C)}{1-a^2c^2} \\
    &= \mathbf B - \frac{a\mathbf A+c\mathbf C-ac(c\mathbf A+a\mathbf C)}{1-a^2c^2} = \mathbf B - \frac{a(1-c^2)\mathbf A+c(1-a^2)\mathbf C}{1-a^2c^2} \\
    &= \mathbf B - \frac{a\bar{c}^2\mathbf A+c\bar a^2\mathbf C}{1-a^2c^2} = \mathbf B - \frac{a\bar{c}^2 (a\mathbf B + \bar a \mathbf U_A)+c\bar a^2(c\mathbf B + \bar c \mathbf U_C)}{1-a^2c^2} \\
    &= \frac{1-a^2c^2-a^2\bar c^2 - \bar a^2c^2}{1-a^2c^2}\mathbf B - \frac{a\bar a \bar c^2}{1-a^2c^2}\mathbf U_A - \frac{c\bar c \bar a^2}{1-a^2c^2}\mathbf U_C \\
    &= \frac{\bar a^2\bar c^2\mathbf B -a\bar a \bar c^2\mathbf U_A- c\bar c \bar a^2\mathbf U_C }{1-a^2c^2} \\
    \Rightarrow \sigma_\epsilon^2 &= \frac{\bar a^4 \bar c^4 +a^2\bar a^2\bar c^4 + c^2\bar c^2\bar a^4}{(1-a^2c^2)^2}  = 
    \frac{\bar a^4 \bar c^4 +(1-\bar a^2)\bar a^2\bar c^4 + (1-\bar c^2)\bar c^2\bar a^4}{(1-(1-\bar a^2)(1-\bar c^2))^2} \\
    &= \frac{\bar a^2\bar c^2(\bar a^2 + \bar c^2 - \bar a^2\bar c^2)}{(\bar a^2 + \bar c^2 - \bar a^2\bar c^2)^2} = \frac{\bar a^2 \bar c^2}{\bar a^2 + \bar c^2 - \bar a^2\bar c^2}\end{align*}
Without loss of generality, let \(\bar{a}^2=s^2 < 1\) be the larger noise term and \(\bar{c}^2=t^2s^2 < 1\) be the smaller noise term, with \(t^2<1\). Then
\[\sigma_\epsilon^2 = \frac{s^2(s^2t^2)}{s^2+s^2t^2-s^2(s^2t^2)} = \frac{s^2t^2}{1 +t^2(1- s^2)}<s^2t^2 < s^2\]
and $R^2_B = 1-\sigma_\epsilon^2 > 1-s^2t^2$ is lower-bounded by 1 minus the smaller of the two noise terms.

\section{Time Series}\label{Time Series Appendix}
Our formulation treats time discretely, as is common to several time series causal inference works, but many relevant systems we may wish to emulate exist in continuous time. Thus, we must understand the relationship between parameters from continuous differential equations and the causal coefficients in our discrete-time systems, and we begin by examining the continuous analog of autoregressive models: the Ornstein-Uhlenbeck process.
\subsection{Continuous Analog: Ornstein-Uhlenbeck Processes}\label{OU}
A univariate zero-mean Ornstein-Uhlenbeck process $X(t)$ is defined by the \textit{Langevin equation} \[dX = -\kappa X dt + \sigma dW\] where $W$ denotes a Wiener process with the property $W(t+\tau)-W(t) \sim \mathcal N(0,\tau)$ \citep{OU}. In this formulation, $\kappa \in \mathbb R^+$ is the rate of reversion to the mean and $\sigma \in \mathbb R^+$ is the noisiness of the process, and the variance of this system is $\sfrac{\sigma^2}{2\kappa}$ (note $\kappa\in\mathbb R^+$). As with our discrete formulations, $\kappa$ is naturally unitless and would not be affected by standardizing $X$ -- instead, $\sigma$ would be set to $\sqrt{2\kappa}$. A discrete sampling with time-step $\Delta$ after standardizing is 
\[\hat{X}_\Delta (t+1) = e^{-\kappa \Delta}\hat{X}_\Delta(t) + \hat{U}(t),\; \hat{U}(t)\sim \mathcal N(0,\sqrt{1-e^{-2\kappa\Delta}})\] with causal coefficient $\hat{a}(\Delta) = (e^{-\kappa})^\Delta$ and noise standard deviation $\hat{s}(\Delta) = \sqrt{1-\hat{a}(\Delta)^2}$. This univariate system has one degree of freedom, parameterized by $\Delta$ -- the inverse of the sampling rate. 

A multivariate Ornstein-Uhlenbeck process has more than one degree of freedom even when it has zero-mean and no interaction terms ($j\neq i \Rightarrow \kappa_{ji}=0$), but there is still only one sampling rate. Thus, in addition to drawing a parameter related to $\Delta$, we must draw parameters related to $\kappa$. As in the static case, it is acceptable if we draw more parameters than degrees of freedom as long as we obey the constraints of the unitless system. The parameters $\kappa_{ii}$ must be drawn from $\mathbb R^+$, but $e^{-\kappa_{ii}}\in(0,1)$ falls in the unit interval. $\Delta$ should also be drawn from $\mathbb R^+$, but a scientist employing time series causal methods should attempt to choose a sampling frequency that produces non-trivial auto-coefficients and noise terms. Thus, we choose to draw $e^{-\kappa} \sim \mathcal U(0,1)$ uniformly and $\Delta \sim \mathcal F(100,100)$ from Snedecor's $F$ distribution with $d_1=d_2=100$, which spans $\mathbb R^+$ and centers at $1$. Since these auto-dependence parameters will not be affected by standardization, the cross-dependence terms and the standard deviation of the noise must adjust to fit the drawn auto-dependence. Given contemporaneous cross-dependencies, we can divide the remaining variance between the cross-dependence terms and noise using a $d_i$-ball in a manner similar to the static case. 

How should we handle lagged cross-dependencies? Examine a bivariate Ornstein-Uhlenbeck process with a one-way causal dependency: 
\begin{align*}
    d\begin{bmatrix}
        X_1(t) \\ X_2(t)
    \end{bmatrix} = -\begin{bmatrix}
        \kappa_{11} & 0 \\ \kappa_{12} & \kappa_{22}
    \end{bmatrix}\begin{bmatrix}
        X_1(t) \\ X_2(t)
    \end{bmatrix}dt + \begin{bmatrix}
        \sigma_1 & 0 \\ 0 & \sigma_2
    \end{bmatrix}d\begin{bmatrix}
        W_1(t) \\ W_2(t)
    \end{bmatrix}
\end{align*}
In its standardized form, $\hat{\kappa}_{ii}=\kappa_{ii}$, $\hat{\kappa}_{12}=\frac{\sigma_1}{\sigma_2}\sqrt{\frac{\kappa_{22}}{\kappa_{11}}}\kappa_{12}$, $ \hat{\sigma}_1^2= 2\kappa_{11}$, and $\hat{\sigma}_2^2 = 2\kappa_{22}\left(1-\frac{2\hat{\kappa}_{12}^2}{\kappa_{22}(\kappa_{11}+\kappa_{22})}\right)$. While the auto-dependence terms $\kappa_{ii}\in\mathbb R^+$ must always be positive, the cross-dependence term $\hat{\kappa}_{12}\in\mathbb R$ may be negative. A discrete sampling of this standardized process with time-step $\Delta$ is
\begin{align*}
    \begin{bmatrix}
        \hat{X}_1(t+1) \\ \hat{X}_2(t+1)
    \end{bmatrix}_\Delta 
    = \begin{bmatrix}
        e^{-\kappa_{11}\Delta} & 0 \\ \hat{\kappa}_{12}\frac{e^{-\kappa_{11}\Delta} - e^{-\kappa_{22}\Delta}}{\kappa_{11}-\kappa_{22}} & e^{-\kappa_{22}\Delta}
    \end{bmatrix}\begin{bmatrix}
        \hat{X}_1(t) \\ \hat{X}_2(t)
    \end{bmatrix}_\Delta + \begin{bmatrix}
        \hat{U}_1(t) \\ \hat{U}_2(t)
    \end{bmatrix}_\Delta
\end{align*} 
The cross-term is a function of the auto-dependence terms $\kappa_{ii}$ and $\Delta$: $\hat{a}_{12}(\Delta) = \frac{\hat{\kappa}_{12}}{\kappa_{11}-\kappa_{22}}(\hat{a}_{11}(\Delta)-\hat{a}_{22}(\Delta))$. Since we draw $e^{-\kappa}$ from a uniform distribution and $\Delta$ from a distribution centered at 1, rather than drawing $\kappa$ and $\Delta$ from $\mathbb R^+$, we will draw $\frac{\hat{\kappa}_{12}}{\kappa_{11}-\kappa_{22}}$ in a manner similar to the static case: starting with an initial draw from $\mathcal N(0,1)$, dividing by the norm from all parents (in this case, canceling out that initial draw), and multiplying by $r_2\sim \mathcal U(0,1)$ (in this case, there is only one parent). Though we do not consider the auto-dependence to be a parent when making the initial draw for the cross-dependence, we must consider it when scaling to create a normalized time series. Since the auto-covariance depends on the cross-dependence, we must solve a system of equations.

With $v_\Delta(x) = \frac{1- e^{-x\Delta}}{x}$, the variances of the noise terms $\vec{\hat{U}}_\Delta(t)$ are:
\begin{align*}
    \hat{s}_1^2 &= \hat{\sigma}_1^2v_\Delta(2\kappa_{11}) = 2\kappa_{11}v_\Delta(2\kappa_{11}) = 1 - \hat{a}_{11}^2(\Delta) 
    \\ 
    \hat{s}_2^2 
    &= \big[\hat{\sigma}_1^2\hat{\kappa}_{12}^2\big(v_\Delta(2\kappa_{11}) - 2v_\Delta(\kappa_{11}+\kappa_{22}) + v_\Delta(2\kappa_{22})\big) \\
    &+ \hat{\sigma}_2^2(\kappa_{11}-\kappa_{22})^2v_\Delta(2\kappa_{22})\big]/(\kappa_{11}+\kappa_{22})^2
\end{align*}

The expression for $\hat{s}_1$ is fairly simple, echoing the expression from the static case. The expression for $\hat{s}_2$ is much more complicated because the total variance of $\hat{X}_2$ used for standardization depends on the system of equations defining the lagged co-variance terms. We will solve for this, rather than calculating it outright. There cross-covariance term is \[    \hat{s}_{12} = \frac{\hat{\sigma}_1^2\hat{\kappa}_{12}}{\kappa_{11}-\kappa_{22}}\left(v_\Delta(2\kappa_{11}) - v_\Delta(\kappa_{11}+\kappa_{22})\right) = \frac{2\kappa_{11}\hat{\kappa}_{12}}{\kappa_{11}-\kappa_{22}}\left(v_\Delta(2\kappa_{11}) - v_\Delta(\kappa_{11}+\kappa_{22})\right)\]
It is non-trivial whenever $\Delta\neq 0$, and positive when $\kappa_{12}<0$. This implies that discrete sampling of a continuous system will \textit{always} produce hidden confounding, but the magnitude of the confounding can be controlled by adjusting $\Delta$. For simplicity, let's examine the situation when $\kappa_{22}=\kappa_{11}=\kappa$. The magnitude of the confounding $\frac{|\kappa_{12}|}{2\kappa}(1-(2\kappa\Delta + 1)e^{-2\kappa\Delta}) < p$ is less than some value $p>0$ as long as $\Delta < -\frac{1}{2}\left(W_{-1}\left(-\frac{1}{e}\left(1-\frac{2\kappa}{|\kappa_{12}|}p\right)\right)+1\right)$, where $W_{-1}$ is the $-1$ branch of the Lambert W-Function. Thus, if $\Delta$ is chosen to be small enough, hidden confounding could be reasonably controlled. However, one must balance this goal with keeping $\Delta$ large enough that each variable has non-negligible noise at each time step. 

\subsection{Multiple Dependencies}\label{mult}
If the discretization of a continuous process is an AR($p$) process with $p>1$, then the continuous process is described by a differential equation of order $>1$. The number of auto-dependencies at discrete lags reflects the degree of non-linearity of the autodependence, rather than distinct parents that reduce the expected fraction of variance due to noise, as in the static case. A non-linear effect cannot be represented with a single parameter, but its effect on the mean of the child process can be quantified using the direct transfer function at frequency 0 \citep{Nicolas}:
\[\hat{\mathfrak{h}}_{ji}(0) = \frac{\sum_{\tau=0}^{\tau_{\max}} \hat{a}_{ji}(\tau)}{1 - \sum_{\tau=0}^{\tau_{\max}} \hat{a}_{ii}(\tau)}\]
This measure includes the indirect effect of $X_j$ on $X_i$ via the past of $X_i$ ($\hat{a}_{ii}(\tau)$), but, as discussed in Sections \ref{TS Exp Intro} and \ref{OU}, we wish to treat auto-dependence separately. Thus, we represent the contribution of each parent using only the numerator of this expression: $\sum_\tau \hat{a}_{ji}(\tau)$. These oscillatory processes have associated characteristic frequencies, and the coefficients in the discretization may vary widely based on the ratio of the sampling frequency to the characteristic frequency in a process called \textit{aliasing}. 

Sampling directly from frequency space is out of the scope of this paper, and since we are doing random generation, it does not matter if we preserve characteristic frequencies between our first draw and our final parameters. Therefore, we will do the initial draws of $a_{ji}^{\prime\prime}(\tau)\sim \mathcal N(0,1)$, but before moving on, we'll also draw contribution parameters $b_{ji}'$ from $\mathcal U(0,1)$ if $i=j$ and from $\mathcal N(0,1)$ otherwise, and scale $a_{ji}^{\prime\prime}(\tau)$ by $\frac{b_{ji}'}{\sum_\tau a_{ji}^{\prime\prime}(\tau)}$. We will use $\{b_{ji}'\forall j\neq i\}$ as the cartesian coordinates in our $d$-ball. The full algorithm is given below.

\begin{algorithm}[H]\caption{UUMC SVAR Generation}\label{ts_alg}
\setlength{\abovedisplayskip}{4pt}
\setlength{\belowdisplayskip}{3pt}

\begin{algorithmic}
    \Require $\mathcal G=\left(\{\hat{X}_i(\tau)\}, E\right)$ a time series graph with topological order \\ 
    $i \in 1...N$ and adjacency matrix \\
    $E = (e_{ji}(\tau)) \in {\{0,1\}}^{N\times N\times (\tau_{\max}+1)}$ with $N \in \mathbb N$ and $\tau_{\max}\in\mathbb Z^+$
    \Ensure  $j\geq i \Rightarrow e_{ji}(0)=0$
\end{algorithmic}
    $R = (\rho_{ji}(\tau)); \; \rho_{ii}(0) \gets 1 \; \forall i$\; \\
    $\vec{\hat{s}} = (\hat{s}_i) \gets \mathbf{1}_N$\; \\
    $E_S = (h_{ji}) \gets ||_\tau E$ is the adjacency matrix for the summary graph \\
    $B' = (b'_{ji}) \gets 0_{N,N}$ \\
    $\vec{C} = (C_i) \gets \boldsymbol{1}_N$ \\
    draw $\Delta \sim \mathcal F(100,100)$\\
    \For{$i \in 1...N$}{
        $d_i \gets \sum_{j\neq i} \vec{h}_i = \#pa_{E_S}(\hat{X}_i)\setminus \hat{X}_i$ \\ 
        \If{$d_i>0 || h_{ii}\neq 0$}{
            draw $\vec{a_{i}^{\prime\prime}}(\tau)\sim \mathcal N(0,1)^{N\times  (\tau_{\max}+1)}$\\
            $a_{ji}^{\prime\prime}(\tau) \gets a_{ji}^{\prime\prime}(\tau)e_{ji}(\tau) \; \forall j, \tau$ 
        }
        \If{$h_{ii}\neq 0$}{
            draw $b_{ii}'\sim \mathcal U(0,1)$\\
            $b_{ii}\gets (b_{ii}')^\Delta$\\
            $\hat{a}_{ii}(\tau) \gets a_{ii}'(\tau) \gets a_{ii}^{\prime\prime}(\tau)\frac{b_{ii}}{\sum_\tau a_{ii}^{\prime\prime}(\tau)}$\\
            $s_i' \gets \sqrt{1-b_{ii}^2}$
        }
        \If{$d_i>0$}{
            draw $b_{j\neq i, i}^{\prime\prime} \sim \mathcal N(0,1)$; $r_i \sim \mathcal U^{\sfrac{1}{d_i}}(0,1)$ \\
            $b_{j\neq i,i}'\gets \frac{r_is_i'}{\sqrt{\sum_{j\neq i}(b_{ji}')^2}}b_{ji}^{\prime\prime} \\
            s_i'\gets s_i'\sqrt{1-r_i^2}$\\
            $b_{j\neq i, i} \gets b_{j\neq i, i}' \frac{b_{jj}^\Delta - b_{ii}^\Delta}{b_{jj}-b_{ii}}$\\
            $a_{j\neq i,i}'(\tau) \gets a_{j\neq i,i}^{\prime\prime}(\tau)\frac{b_{j\neq i,i}}{\sum_\tau a_{j\neq i,i}^{\prime\prime}(\tau)}$\\
            $C_i \gets C_i; \; \hat{s}_i \gets s_i'/C_i$
        }
    }
    $\hat{a}_{ji}(\tau) \gets a_{ji}'(\tau)/C_i \; \forall i,\tau,j\neq i$\\
    Solve for $C_i$ and $\rho_{ji}(\tau)$ in 
    \begin{align*}
         \begin{cases}
             1 = \sum_{jk}\sum_{\tau\nu} a_{ji}'(\tau)a_{ki}'(\nu)\rho_{jk}(\tau-\nu) + (s_i')^2 \; \forall i\\
             \rho_{ji}(\tau) = \sum_k\sum_\nu a_{ki}'(\nu)\rho_{jk}(\tau-\nu) \; \forall i,j, \tau
         \end{cases}
    \end{align*} 
    Check that \(\det(\hat{A}(0)^{-1} + \mathbf{I}_N - \sum_{\tau=1}^{\tau_{\max}}\hat{A}(\tau)z^\tau)=0\) yields roots $z$ that lie in the unit circle.
\end{algorithm}
\end{document}